\pdfoutput=1

\documentclass[11pt]{article}

\usepackage[preprint]{acl}

\usepackage{times}
\usepackage{latexsym}

\usepackage[T1]{fontenc}

\usepackage[utf8]{inputenc}

\usepackage{microtype}

\usepackage{inconsolata}

\usepackage{graphicx}

\usepackage{amsmath}
\usepackage{amsthm}
\usepackage{amsfonts}
\usepackage{enumerate}
\usepackage{xspace}
\usepackage{ifthen}
\usepackage{tabularx}
\usepackage{amssymb}
\usepackage{soul}

\usepackage{resizegather}
\usepackage{algorithm}
\usepackage[noend]{algpseudocode} 
\algrenewcommand{\algorithmiccomment}[1]{\textcolor{gray}{$\triangleright$ #1}}

\usepackage{booktabs}
\usepackage{multirow}
\usepackage{subcaption}

\theoremstyle{plain}
\newtheorem{theorem}{Theorem}
\newtheorem{lemma}{Lemma}
\theoremstyle{definition}
\newtheorem{definition}{Definition}

\newcommand{\todop}[1]{{\color[HTML]{F5252C} \textbf{(TODO: #1)}}}
\newcommand{\todocite}[1]{\todop{CITE\ifthenelse{\equal{#1}{}}{}{ #1}}}

\newcommand{\Displacement}{\Delta_G}

\newcommand{\LangPfx}{\textit{Pr}}
\newcommand{\SynCon}{\equiv_{L(G)}}
\newcommand{\SynConApprox}{\sim_{L(G)}}

\title{Accelerating Constrained Decoding with Token Space Compression}

\author{Michael Sullivan, Alexander Koller \\
  Department of Language Science and Technology \\
  Saarland Informatics Campus \\
  Saarland University, Saarbrücken, Germany \\
  \texttt{\{msullivan, koller\}@coli.uni-saarland.de} 
}

\begin{document}
\maketitle
\begin{abstract}
To guarantee that an LLM's outputs conform to a specified structure, context-free grammar (CFG) decoding engines force the selection of next tokens that produce strings that conform to a given CFG. While current CFG-constrained decoding engines are highly optimized, the inherent costs arising from the massive per-step search space\textemdash i.e. the entire token vocabulary\textemdash result in intractably high overhead for more complex CFGs: precisely the situation where CFG engines are most useful. In this paper, we introduce \textsc{CFGzip}, an offline technique for compressing the token search space, which massively reduces CFG engine overhead. In experiments, we report latency reduction of up to two orders of magnitude when \textsc{CFGzip} is used with a SoTA grammar engine, yielding an up to 7.5x speedup in total constrained generation time: with \textsc{CFGzip}, constrained decoding is now feasible at scale for complex CFGs.
\end{abstract}

\section{Introduction}
\label{sec_intro}







Many modern LLM applications such as coding \cite{huynh2025large,jiang2026survey}, tool calling \cite{qintoolllm,sullivan2025procedural,patil2025berkeley}, etc. necessitate structured generation: the generation of text that conforms to a specific format. Although LLMs excel at generating syntactically-correct text in formats frequently found in their training data \cite{bogin-etal-2024-leveraging,mundler2025type}, they struggle to conform to the specifications of novel formats such as domain-specific languages and unique programming languages (see Section \ref{sec_experiments}). In such situations, it is desirable to constrain the model's outputs to guarantee syntactic correctness. 

\begin{figure}[t]
\centering
\includegraphics[width=\linewidth]{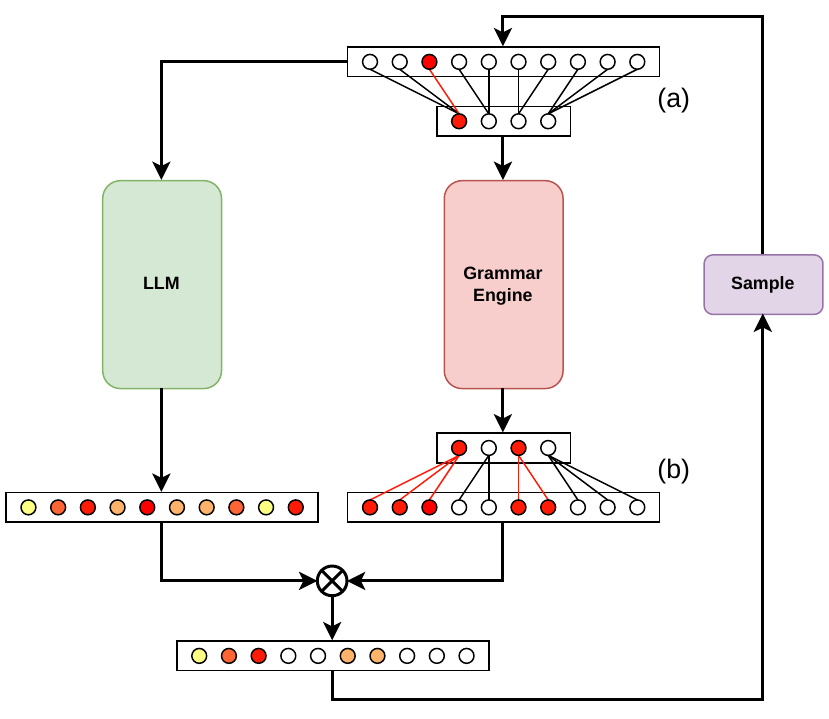}
\caption{\textsc{CFGzip} first pre-computes equivalence classes over the token vocabulary. At inference time, the grammar engine operates over the smaller vocabulary of equivalence class representatives: (a) a sampled token is mapped to the representative of its equivalence class; and (b) the grammar engine produces a mask over the equivalence class vocabulary, which is then mapped to a mask over the token vocabulary.}
\label{fig_fig1}
\end{figure}

Structural constraints for LLM decoding are often defined by context-free grammars \cite[CFGs;][etc.]{chomsky1956three,angelov-2009-incremental,dong2025xgrammar}, which are strictly more expressive than regular expressions, and encompass use cases such as the JSON file format, XML, many programming languages, chemical structure notation \citep[e.g.][]{weininger1988smiles}, etc. 

To guarantee that LLM outputs conform to a given CFG, constrained decoding engines are employed to modify the LLM's next-token distribution, masking the probability of all tokens which would lead to a generated sequence that violates the format specified by the CFG.

\begin{figure*}[t]
\begin{center}
\includegraphics[width=1.0\textwidth]{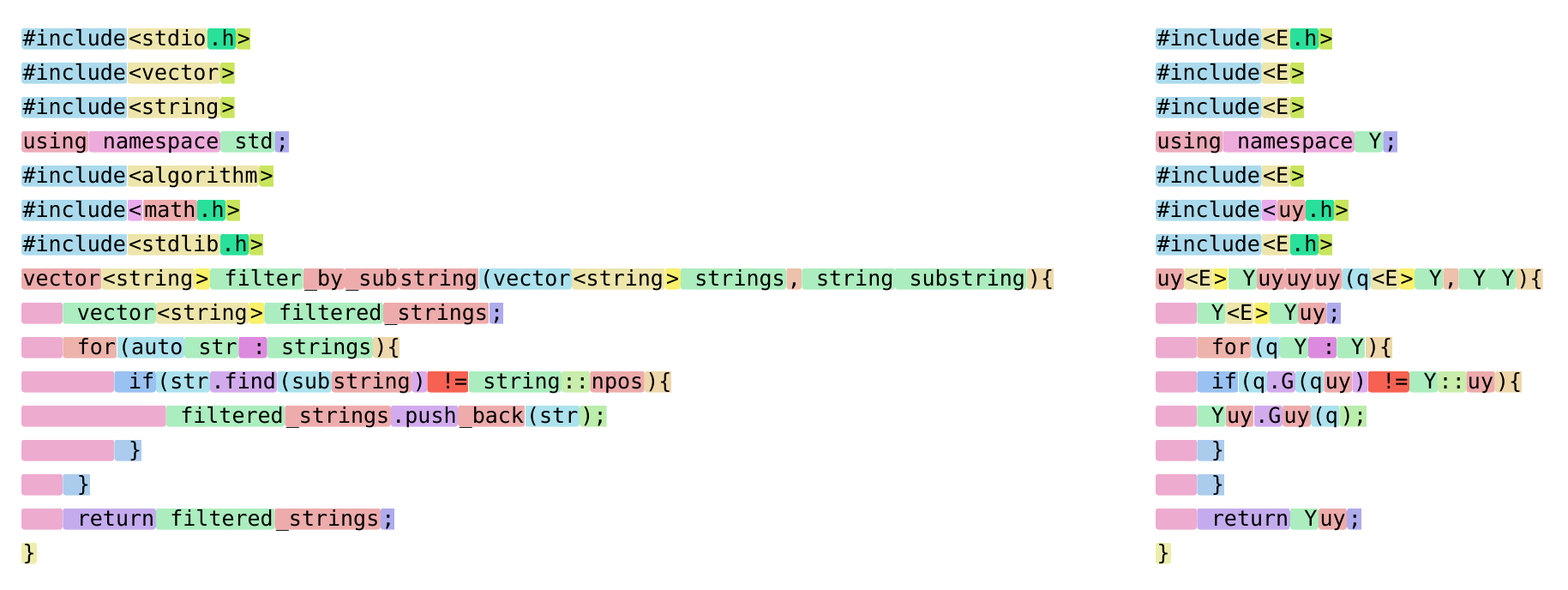}
\end{center}
\caption{C++ code generated by Llama-3.2-3B-Instruct using \textsc{CFGzip}. The LLM operates over the actual tokens (left), while the grammar engine operates over the smaller vocabulary of equivalence class representatives (right): in this figure, each unique equivalence class is highlighted with a unique color for reference.}
\label{fig_token_classes}
\end{figure*}

Given the large overhead already associated with parsing a CFG\textemdash $\mathcal{O}(n^3)$ in the worst case \cite{earley1970efficient}\textemdash the runtime cost associated with naively checking the validity of each token at each step can be prohibitively high. For this reason, SoTA grammar engines incorporate substantial online and offline optimization techniques. Despite these improvements, even the fastest grammar engines still introduce massive latency\textemdash a 2-10x slowdown\textemdash when used with more complex CFGs (e.g. programming languages; see Section \ref{sec_experiments}).

To ameliorate this online overhead, we introduce \textsc{CFGzip} (Section \ref{sec_method}), a compression technique that vastly reduces the per-step runtime of the grammar engine. The key idea behind our approach is that many tokens are interchangeable with respect to the grammar, in that we need only check the validity of one to determine the validity of all tokens within a given equivalence class. 

\textsc{CFGzip} works by first pre-computing these equivalence classes, then wrapping the grammar engine at inference time, so that the engine operates over representatives of the equivalence classes, rather than the token vocabulary itself (see Figures \ref{fig_fig1} and \ref{fig_token_classes}). In this way, \textsc{CFGzip} synergizes with the engine's built-in optimizations, resulting in a more than \textbf{20x reduction in overhead over the current SoTA} grammar engine. This corresponds to \textbf{an up to 7.5x reduction in total constrained generation time}\textemdash i.e. the combined runtime of the LLM and grammar engine\textemdash for more complex CFGs (see Section \ref{sec_experiments}), making CFG-constrained generation feasible at scale for complex CFGs. \textsc{CFGzip} is:

\begin{enumerate}
    \item \textbf{Offline:} compression is performed once per tokenizer/CFG pair, and the compressed vocabulary is cached to disk. At inference time, \textsc{CFGzip} incurs virtually no overhead.

    \item \textbf{Lossless:} \textsc{CFGzip} compression is provably lossless, meaning that the outputs of a grammar engine equipped with \textsc{CFGzip} are guaranteed to be byte-identical to those of the original engine.
\end{enumerate}

We make available on GitHub all code used in the experiments in this paper\footnote{\url{https://github.com/coli-saar/cfgzip-experiments}}\textemdash \textsc{CFGzip} itself is available as a pip-installable Python package\footnote{\url{https://pypi.org/project/cfgzip}}.

\section{Background and Preliminaries}
\label{sec_background}

In the following, we adopt the Kleene star \cite{kleene1956representation} notation: given a set $X$, we define $X^*$ as the set of all sequences of elements of $X$ (i.e. strings over $X$), including the empty sequence/string $\epsilon$. Given strings $w,z$, we denote their concatenation through juxtaposition: $wz$.

\subsection{Context-Free Grammars (CFGs)}
\label{sec_background_sub_cfgs}

A CFG $G=(V,\Sigma,R,S)$ is defined by a set $V$ of \textit{non-terminal symbols}, an \textit{alphabet} (set of characters) $\Sigma$, a \textit{start symbol} $S\in V$, and a set $R$ of productions (rewrite rules) of the form $A\rightarrow\alpha$, where $A\in V$ is a non-terminal and $\alpha\in(V\cup\Sigma)^*$ is a sequence of terminals and non-terminals. 

Each CFG $G$ defines a \textit{language} (set of strings) $L(G)\subseteq\Sigma^*$, where $w\in L(G)$ if and only if $S\Rightarrow^*w$: i.e. there is a finite sequence of rules $S\rightarrow\alpha_0,A_1\rightarrow\alpha_1,\dots,A_n\rightarrow\alpha_n\in R$ that derives $w$ from $S$ (a \textit{parse} of $w$). 
We additionally define the set $\textit{Pr}(L(G))$ of \textit{prefixes} of $L(G)$: strings $w\in\Sigma^*$ such that there is some (possibly empty) string $z\in\Sigma^*$ such that the concatenation $wz$ is in $L(G)$. 


\paragraph{Greibach Normal Form \citep[GNF;][]{greibach1965new}.} A CFG $G$ is in GNF if all productions in $G$ are of the form $S\to\epsilon$ (where $S$ is the start symbol) or $A\to a\hspace{1mm}\beta$, where $a\in\Sigma$ is a (non-$\epsilon$) terminal symbol and $\beta\in (V-\{S\})^*$ is a (possibly empty) sequence of non-terminal, non-start symbols.



It is a well-known result that for every CFG $G$, there is an equivalent CFG $G'$ in GNF\textemdash i.e. such that $L(G)=L(G')$. Each GNF CFG $G$ defines a simple pushdown automaton (PDA\footnote{\textit{Every} CFG defines a PDA: we defer an in-depth discussion of PDAs to prior work, and provide an semi-formal description here that is sufficient for the discussion at hand.}) that recognizes (parses) $L(G)$: this PDA consists of a stack $\sigma$ of non-terminal symbols of $G$, and a transition function\footnote{We ignore states in the definition of the transition function, as PDAs derived from GNF CFGs have only one state.} $\delta_G$ mapping pairs of terminals and non-terminals to sequences of non-terminals. For each terminal $a\in\Sigma$, non-terminal $A\in V$, and sequence of non-terminals $\beta\in V^*$,  $\beta\in\delta_G(a,A)$ if and only if $A\to a\hspace{1mm}\beta$ is a production in $G$.

Given a string $w$ in $\Sigma^*$, the PDA parses $w$ non-deterministically, with its stack initialized to $\sigma=(S)$, where $S$ is the start symbol of $G$. The PDA scans $w$ character-by-character: at the $i^{th}$ character of $w$, the PDA non-deterministically (i) pops the top symbol $A$ from $\sigma$; and (ii) pushes $\beta$ to $\sigma$ for $\beta\in\delta_G(w_i,A)$. The PDA accepts $w$ ($w\in L(G)$) if $\sigma$ is empty when the end of $w$ is reached\textemdash otherwise, $w$ is not a member of $L(G)$. 

\subsection{XGrammar2}
\label{sec_background_sub_xgrammar}

A CFG decoding engine ensures that an LLM generates a sequence that conforms to a given CFG $G$. Precisely, for each LLM output token $y_{i}$, the decoding engine ensures that the sequence $y_{:i+1}=y_{:i}y_i$ is a valid prefix of $L(G)$: i.e. $y_{:i+1}\in\textit{Pr}(L(G))$. This is achieved by logit masking: for all tokens $t^{(k)}$ in the token vocabulary $T$ such that $y_{:i}t^{(k)}\notin\textit{Pr}(L(G))$, the corresponding logit $h_k$ is set to $-\infty$, so that the probability of $t^{(k)}$ is $0$.

We restrict the present discussion to an overview of XGrammar2 \cite{li2026xgrammar}\textemdash a SoTA grammar engine\textemdash because this is the engine that we pair with \textsc{CFGzip} in our experiments in Section \ref{sec_experiments}: although \textsc{CFGzip} is theoretically engine-agnostic, no other grammar engine is compatible with the more sophisticated CFGs used in our experiments (see the discussion in Section \ref{sec_limitations}).

Underlying XGrammar2 is an Earley parser \cite{earley1970efficient}, which can run in time linear to the input string length $n$ for a specific class of CFGs\footnote{Non-right recursive $\textit{LR}(k)$ grammars.}, but has $\mathcal{O}(n^3)$ runtime in the worst-case scenario for arbitrary CFGs. Here, the Early parser maintains a running parse of the generated sequence, and is used to check the compatibility of each token in the vocabulary to construct the next-token logit mask. 

If done naively, constructing the logit mask requires verifying each token in the vocabulary at each decoding step, resulting in a worst-case runtime of $\mathcal{O}(|T|\cdot n^3)$. To mitigate this cost, XGrammar2 introduces several optimizations that reduce the number of tokens that need to be checked at each step: for example, pre-computing tokens that are guaranteed to be rejected in a given parser state.

The heavily-optimized XGrammar2 can quickly compute token masks for simple CFGs such as JSON schema grammars, resulting in negligible overhead. However, as we demonstrate empirically in Section \ref{sec_experiments}, more complex CFGs still present a challenge to XGrammar2: we record a $\sim$2-10x slowdown over the base LLM with CFGs such as C++ and a modified variant of Python. 

\section{\textsc{CFGzip}}
\label{sec_method}

For a given CFG $G$ and token vocabulary $T$, \textsc{CFGzip} computes equivalence classes of tokens that are interchangeable with respect to $G$: at each step, the grammar engine needs only verify one token per equivalence class (see Figure \ref{fig_token_classes}).

In Section \ref{sec_method_sub_syn_con}, we provide an overview of the \textsc{CFGzip} pipeline using syntactic congruence \cite{myhill1957finite} as the underlying equivalence relation, for the sake of exposition. We further prove that equivalence-class compression using syntactic congruence is lossless and engine-agnostic.

However, computing the syntactic congruence relation is undecidable for arbitrary context-free languages \cite{bar1961formal}: in Section \ref{sec_method_sub_approx}, we define a computable and tractable approximation of syntactic congruence. We prove that this relation is a refinement of syntactic congruence, which guarantees that it maintains the lossless and engine-agnostic properties of the latter.

\subsection{\textsc{CFGzip} Pipeline}
\label{sec_method_sub_syn_con}

\paragraph{Syntactic Congruence.} Intuitively, strings $t,u\in\Sigma^*$ are \textit{syntactically congruent} ($t\SynCon u$) when they are interchangeable relative to $L(G)$: each instance of $t$ in a string in $L(G)$ can be swapped with $u$ to yield a string that is also in $L(G)$.

\begin{definition}[Syntactic Congruence; \citeauthor{myhill1957finite}, \citeyear{myhill1957finite}]
\label{def_syn_congruence}
    For $t,u\in\Sigma^*$, $t\SynCon u$ if and only if for all $w,z\in\Sigma^*$: $wtz\in L(G)\leftrightarrow wuz\in L(G)$ 
\end{definition}

Given two sequences of tokens $\tau=t_0\dots t_{n-1}$, $\tau'=t_0'\dots t_{n-1}'$ that are element-wise congruent\textemdash i.e. $t_i\SynCon t'_i$ for all $0\leq i<n$\textemdash it is straightforward to show that $\tau$ and $\tau'$ are themselves congruent: $\tau\SynCon\tau'$ (proof in Lemma \ref{lem_syn_con_cat} of Appendix \ref{app_proof_thm_token_seq_equiv}). It then follows from Definition \ref{def_syn_congruence} that $\tau$ is a valid prefix of $L(G)$ if and only if $\tau'$ is as well.







\begin{theorem}
\label{thm_token_seq_equiv}
For any $n\in\mathbb{N}$, alphabet $\Sigma$, language $L$ over $\Sigma$, and length-$n$ sequences of tokens $\tau=t_0\dots t_{n-1}$, $\tau'=t_0'\dots t_{n-1}'$ over $\Sigma$ with $t_i\equiv_Lt_i'$ (for all $0\leq i<n$): $\tau\in\LangPfx(L)\leftrightarrow\tau'\in\LangPfx(L)$
\end{theorem}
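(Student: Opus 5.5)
The plan is to reduce the statement to two facts: congruence is preserved under concatenation, and congruent strings agree on membership in $\LangPfx(L)$. Syntactic congruence is defined for an arbitrary language $L$ exactly as in Definition~\ref{def_syn_congruence}, with $L$ in place of $L(G)$, so I will work with a general $L$ throughout.

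\textbf{Step 1: $\tau\equiv_L\tau'$, by induction on $n$.} (This is the content of Lemma~\ref{lem_syn_con_cat}, which I would reprove if needed.) For $n=0$ both sequences are $\epsilon$, and reflexivity gives the claim. For the inductive step, write $\tau=\rho t_{n-1}$ and $\tau'=\rho' t'_{n-1}$, where the induction hypothesis gives $\rho\equiv_L\rho'$. I pass from $\rho t_{n-1}$ to $\rho' t'_{n-1}$ through the intermediate string $\rho' t_{n-1}$. Fix arbitrary $w,z\in\Sigma^*$.
\begin{enumerate}
\item Apply the congruence $\rho\equiv_L\rho'$ with left context $w$ and right context $t_{n-1}z$. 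This gives $w\rho t_{n-1}z\in L\leftrightarrow w\rho' t_{n-1}z\in L$.
\item Apply the congruence $t_{n-1}\equiv_L t'_{n-1}$ with left context $w\rho'$ and right context $z$. This gives $w\rho' t_{n-1}z\in L\leftrightarrow w\rho' t'_{n-1}z\in L$.
\end{enumerate}
Chaining the two biconditionals yields $w\tau z\in L\leftrightarrow w\tau' z\in L$ for all $w,z$, which is $\tau\equiv_L\tau'$. The key point is that contexts in the definition are arbitrary strings, so concatenations like $t_{n-1}z$ or $w\rho'$ may themselves serve as contexts.

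\textbf{Step 2: transfer to prefixes.} Suppose $\tau\in\LangPfx(L)$. Then there is some $z$ with $\tau z\in L$. Instantiating $\tau\equiv_L\tau'$ with $w=\epsilon$ and this $z$ gives $\tau' z\in L$, so $\tau'\in\LangPfx(L)$. The converse direction follows by the same argument, since the relation is symmetric.

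I expect no real obstacle. The only care needed is in Step 1: each congruence must be applied with the correct enlarged context, and each step should change a single factor rather than both at once.
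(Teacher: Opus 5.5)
Your proof is correct and follows essentially the same route as the paper: your inductive step (passing through the intermediate string $\rho' t_{n-1}$) is exactly the paper's Lemma~\ref{lem_syn_con_cat}, and the paper then takes $w=\epsilon$ in the left-quotient form of congruence to get $\tau\backslash L=\tau'\backslash L$, hence equal prefix membership. Your explicit induction on $n$ and your direct use of the witness $z$ simply spell out what the paper leaves implicit.
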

\begin{proof}
Appendix \ref{app_proof_thm_token_seq_equiv}.
\end{proof}

A critical corollary of Theorem \ref{thm_token_seq_equiv} is that we can losslessly map between the token vocabulary $T$ that is used by the LLM, and the set of equivalence class representatives $E\subseteq T$ used by the grammar engine under \textsc{CFGzip}: replacing each token $t_i$ in a sequence with another token that is congruent to $t_i$ does not affect the validity of the sequence with respect to $G$ (see Figure \ref{fig_token_classes}).

\paragraph{Pre-Computation.} \textsc{CFGzip} first groups the token vocabulary $T$ into equivalence classes, using (an approximation of) syntactic congruence as the equivalence relation. For each equivalence class $e$, we choose as its representative the byte-wise shortest token in $e$, as the grammar engine's underlying parser still operates on the character/byte level.


We then compute and cache a vector $c\in\mathbb{N}^{|T|}$ that maps token IDs to class IDs: for the $i^{th}$ token $t^{(i)}\in T$, $c_i$ is the ID of that token's equivalence class. We additionally compute and cache a vector $r\in\mathbb{N}^{|E|}$ that maps class IDs to representatives, such that the token $r_k$ is the representative of the $k^{th}$ equivalence class. For a token vocabulary size of 100,000 to 200,000 tokens, these two vectors occupy less than a megabyte in memory.

\paragraph{Inference.} \textsc{CFGzip} initializes the grammar engine with a vocabulary consisting of the set $E$ of equivalence class representatives, in place of the entire token vocabulary $T$ (see Figure \ref{fig_token_classes}).


At inference time, a token $t^{(i)}\in T$ is sampled from the next-token distribution, and appended to the current generation sequence. Rather than $t^{(i)}$, we pass to the grammar engine the class representative $r_{c_i}$ to update its internal state (see Figure \ref{fig_fig1}): by Theorem \ref{thm_token_seq_equiv}, this is equivalent to passing the original token $t^{(i)}$.

The LLM then produces next-token log probabilities $z\in\mathbb{R}^{|T|}$ over the token vocabulary $T$, and the grammar engine produces a binary mask $m\in\{-\infty,1\}^{|E|}$ over the equivalence class vocabulary $E$. Rather than directly pointwise multiplying $z$ by $m$ (which is impossible), we instead employ a gather operation so that $z_i$ is multiplied by the mask index corresponding to the equivalence class of $t^{(i)}$: $z_i\gets z_i\cdot m_{c_i}$. This gather-then-mask operation can be readily parallelized on a GPU, and so incurs minimal runtime overhead.

\subsection{Approximating Congruence}
\label{sec_method_sub_approx}

As syntactic congruence $t\SynCon u$ is undecidable for context-free languages, we use in practice a computable refinement $t\SynConApprox u$ for equivalence-class based compression in the \textsc{CFGzip} pipeline of Section \ref{sec_method_sub_syn_con}.

To that end, we first convert $G$ to Greibach Normal Form (GNF), to obtain the simplified PDA discussed in Section \ref{sec_background_sub_cfgs}. Conversion to GNF results in a worst-case $\mathcal{O}(n^4)$ increase in the number of grammar productions. However, as the relation $\SynCon$ is defined relative to the \textit{language} $L(G)$ of $G$\textemdash and the approximation $\SynConApprox$ is a refinement of $\SynCon$\textemdash we need only convert $G$ to GNF for the purpose of equivalence class computation: \textsc{CFGzip} does not necessitate any modification to the grammar passed to the grammar engine itself.

\subsubsection{Displacement Equivalence}
\label{sec_method_sub_approx_sub_disp}


For each token $t\in T$, we define the \textit{displacement} $\Displacement(t)$ as the set of all pairs $(\sigma^{(i)},\sigma^{(o)})\in V^*\times V^*$ such that $\sigma^{(o)}$ is a stack that can be obtained by consuming the stack $\sigma^{(i)}$ to process $t$, using the PDA transition function $\delta_G$ (see Section \ref{sec_background_sub_cfgs}).

For any tokens $t,u\in T$, $\Displacement(t)=\Displacement(u)$ implies $t\SynCon u$: if the sets of input stacks that accept $t$ and $u$ under $\delta_G$ are identical, the sets of strings to which $t$ and $u$ can be appended to form valid prefix strings are identical. Similarly, if $t$ and $u$ each map the same input stack $\sigma^{(i)}$ to the same output stacks $\sigma^{(o)}$ under $\delta_G$, the sets of strings that can follow $t$ and $u$ in a given context are identical.

\begin{theorem}
\label{thm_displacements}
For any alphabet $\Sigma$, CFG $G$ over $\Sigma$, and $t,u\in\Sigma^*$: $\Displacement(t)=\Displacement(u)\rightarrow t\SynCon u$ 
\end{theorem}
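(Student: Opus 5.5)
The plan is to reduce membership in $L(G)$ to a statement about PDA runs, and then show that any run reading $t$ can be cut out and replaced by a run reading $u$. First I pin down the displacement precisely. For a string $x\in\Sigma^*$, $(\sigma^{(i)},\sigma^{(o)})\in\Displacement(x)$ means: starting from stack $\sigma^{(i)}$, the GNF PDA of Section~\ref{sec_background_sub_cfgs} can read $x$ character by character and end with stack $\sigma^{(o)}$. At each character $x_j$ it pops the top symbol $A$ and pushes some $\beta\in\delta_G(x_j,A)$.

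The key tool is a composition lemma. For $x,y\in\Sigma^*$,
\[(\sigma,\sigma'')\in\Displacement(xy)\iff \exists\sigma' :\ (\sigma,\sigma')\in\Displacement(x)\wedge(\sigma',\sigma'')\in\Displacement(y).\]
This holds because a run on $xy$ is exactly a run on $x$ followed by a run on $y$, split at the stack reached after $|x|$ characters. I prove it by induction on $|x|$, or simply by splitting the sequence of configurations.

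Next I restate acceptance from Section~\ref{sec_background_sub_cfgs}: $x\in L(G)$ iff $((S),\epsilon)\in\Displacement(x)$. The case $x=\epsilon$ needs care, since $S\to\epsilon$ is not a PDA transition. So I treat $\epsilon\in L(G)$ separately, or note that $\Displacement(\epsilon)$ is the identity relation and handle the empty word by the $S\to\epsilon$ rule. The situations where this matters are $w=z=\epsilon$ together with one of $t$, $u$ being empty. If $t=\epsilon$, then $\Displacement(t)$ is the identity, so $\Displacement(u)$ is too. Because every GNF rule consumes a character, a nonempty string has a displacement that never contains $((S),(S))$ unless there is an $S$-loop. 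I therefore expect a short separate argument for when $t$ or $u$ is empty, possibly just that $u$ must then also be $\epsilon$, or accepting the mild convention. This convention issue is the step I expect to be fiddly rather than deep.

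The main argument then goes as follows. Fix $w,z$ and suppose $wtz\in L(G)$ (with $wtz$ nonempty). By acceptance, $((S),\epsilon)\in\Displacement(wtz)$. Applying the composition lemma twice gives stacks $\sigma_1,\sigma_2$ with
\[((S),\sigma_1)\in\Displacement(w),\quad(\sigma_1,\sigma_2)\in\Displacement(t),\quad(\sigma_2,\epsilon)\in\Displacement(z).\]
Since $\Displacement(t)=\Displacement(u)$, we also have $(\sigma_1,\sigma_2)\in\Displacement(u)$. Recomposing with the lemma yields $((S),\epsilon)\in\Displacement(wuz)$, so $wuz\in L(G)$. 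The converse is symmetric, which establishes Definition~\ref{def_syn_congruence}.

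The only real obstacle is rigor in the composition lemma, specifically that runs are determined stepwise by the top-of-stack pop and nothing else. This holds because the GNF PDA has a single state and no $\epsilon$-moves, so splitting a run at a character boundary is exact.
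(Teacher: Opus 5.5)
\textbf{Verdict: genuine gap.} Your argument is correct for the relation you define: the full transition relation of the PDA, which the paper calls the pre-displacement $D_G(t)$. In that form it is essentially Lemma~\ref{lem_pre_displacement}, reached by splitting an accepting run of $wtz$ into three segments rather than via $\pi_S$ and left quotients. But the $\Displacement(t)$ in the theorem is the set returned by Algorithm~\ref{alg_disp}, and that set is smaller in two ways:
\begin{itemize}
\item $\sigma^{(i)}$ is assembled only from symbols popped during stack backtracks, so $\Displacement(t)$ contains only pairs whose input stack is consumed entirely (the paper's $D'_G(t)$);
\item backtracks are further pruned by $\mathcal{A_S}$.
\end{itemize}
Equality of these smaller sets is a weaker hypothesis, and both of your key steps fail under it.

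First, stated for $\Displacement$, the composition lemma is false. Every pair in $\Displacement(a)$ for a single character $a$ has an input stack of length one. So if $A\to a$, $B\to b$ and $(A,B)\in\mathcal{A_S}$, then $(AB,\epsilon)\in\Displacement(ab)$ is not a composite of pairs from $\Displacement(a)$ and $\Displacement(b)$. Second, the transfer ``$(\sigma_1,\sigma_2)\in\Displacement(t)$, hence $\in\Displacement(u)$'' does not apply. $\sigma_1$ typically has a suffix that the $t$-segment never touches, so in general $(\sigma_1,\sigma_2)\notin\Displacement(t)$ and the hypothesis says nothing about it.

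Two ingredients are missing.

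(i) Locality and lifting. A run reading $t\neq\epsilon$ from $\sigma_1$ pops only a prefix, so $\sigma_1=\hat\alpha\gamma$ and $\sigma_2=\hat\beta\gamma$ with $(\hat\alpha,\hat\beta)$ fully consumed. Conversely, a fully consumed run of $u$ from $\hat\alpha$ to $\hat\beta$ never pops an empty stack before $u$ ends, so it lifts to a run from $\hat\alpha\gamma$ to $\hat\beta\gamma$. Apply the hypothesis to $(\hat\alpha,\hat\beta)$, lift by $\gamma$, then recompose with the $w$- and $z$-runs. Your composition lemma for runs, not for $\Displacement$, is fine here.

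(ii) Surviving the pruning. You must show $(\hat\alpha,\hat\beta)$ is not discarded at line~7 of Algorithm~\ref{alg_disp}. At each backtrack after the first, $A_{\textit{prev}}$ and the newly enqueued $A$ are popped consecutively. In your segment this happens inside an accepting run on $wtz\in L(G)$, so $(A_{\textit{prev}},A)\in\mathcal{A_S}$ by the definition of $\mathcal{A_S}$. Your choice to decompose an \emph{accepting} run is exactly what makes (ii) work. The paper's corresponding step asserts $\pi_S(wt)=\pi_S(wu)$ over all reachable stacks, which additionally needs every run from $S$ to extend to an accepting one.

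The empty-token case you flag is immediate. $\Displacement(\epsilon)=\{(\epsilon,\epsilon)\}$, while every pair in $\Displacement(u)$ with $u\neq\epsilon$ has a nonempty input stack, so $t=\epsilon$ iff $u=\epsilon$.
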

\begin{proof}
Appendix \ref{app_proof_thm_displacements}.
\end{proof}

The conclusion of Theorem \ref{thm_displacements} states that the relation defined by equivalence of displacements refines  $\SynCon$: it can be used for lossless compression in the \textsc{CFGzip} pipeline (Section \ref{sec_method_sub_syn_con}).  

\begin{algorithm}[t]
  \caption{Computation of $\Displacement(t)$}
  \label{alg_disp}
  \small
  \begin{algorithmic}[1]
    \Require token $t\in\Sigma^*$, CFG $G=(V,\Sigma,R,S)$
    \Statex \hspace*{2.3em} stack adjacency relation $\mathcal{A_S}\subseteq V\times V$

    \Procedure{CD}{$t,\sigma^{(i)},\sigma^{(o)},A_\textit{prev}$}
        \State $Y\gets\emptyset$
        \If{$t=\epsilon$}
            \State $Y\gets\{(\sigma^{(i)},\sigma^{(o)})\}$
        \ElsIf{$\sigma^{(o)}=\epsilon$} \Comment{backtrack}
            \For{$A\to t_0\hspace{1mm}\beta\in R$}
                \If{$(A_\textit{prev},A)\in\mathcal{A_S}$}
                    \State $Y\gets Y\hspace{0.5mm}\cup\hspace{0.5mm}$\Call{CD}{$t_{1:},\sigma^{(i)}A,\beta,A$}
                \EndIf
            \EndFor
        \Else
            \For{$\beta\in\delta_G(t_0,\sigma^{(o)}_0)$}
                \State $Y\gets Y\hspace{0.5mm}\cup\hspace{0.5mm}$\Call{CD}{$t_{1:},\sigma^{(i)},\beta\sigma^{(o)}_{1:},\sigma^{(o)}_0$}
            \EndFor
        \EndIf
        \State \Return $Y$        
    \EndProcedure

    \State \Return \Call{CD}{$t,\epsilon,\epsilon,\epsilon$}
  \end{algorithmic}
\end{algorithm}

\subsubsection{Displacement Computation}
\label{sec_method_sub_approx_sub_disp_comp}

For each token $T$, we compute $\Displacement(t)$ as in Algorithm \ref{alg_disp} (see the example given in Figure \ref{fig_stack_backtrack}). 


Given an input token $t$, we initialize an input queue $\sigma^{(i)}$ and an output stack $\sigma^{(o)}$. We first non-deterministically select a production $A\to t_0\hspace{1mm}\beta$ that produces the first character (byte) of $t$, and push $A$ and $\beta$ to $\sigma^{(i)}$ and $\sigma^{(o)}$, respectively (Figure \ref{fig_stack_backtrack_1}). We then proceed one character at a time along $t$, non-deterministically popping the top symbol $A$ from $\sigma^{(o)}$, and then pushing $\beta\in\delta_G(c,A)$ back to $\sigma^{(o)}$, where $c$ denotes the current character of $t$ (Algorithm \ref{alg_disp}, lines 12-14; Figures \ref{fig_stack_backtrack_2}-\ref{fig_stack_backtrack_3})\textemdash recall that $\beta\in\delta_G(c,A)$ corresponds to a production $A\to c\hspace{1mm}\beta$ in $G$ (see the discussion in Section \ref{sec_background_sub_cfgs}).

If the end of $t$ is reached, we return $(\sigma^{(i)},\sigma^{(o)})$, which indicates that $t$ transitions the stack $\sigma^{(i)}$ to $\sigma^{(o)}$ under $\delta_G$ (Algorithm \ref{alg_disp}, lines 3-4). However, if $\sigma^{(o)}$ is emptied before the end of $t$, we perform a \textit{stack backtrack}: non-deterministically select a production $A\to c\hspace{1mm}\beta$, enqueue $A$ to $\sigma^{(i)}$, and push $\beta$ to $\sigma^{(o)}$ (Algorithm \ref{alg_disp}, lines 5-8; Figure \ref{fig_stack_backtrack_3}-\ref{fig_stack_backtrack_4}).

As Algorithm \ref{alg_disp} is non-deterministic, we found in practice that the naive stack backtrack was intractable: the blowup in possible input sequences caused by the greater number of nonterminals in larger CFGs resulted in out-of-memory errors. 

We therefore restrict the stack backtrack operation using the pre-computed stack-adjacency relation $\mathcal{A_S}\subseteq V\times V$: given nonterminals $A,B\in V$, $(A,B)\in\mathcal{A_S}$ precisely when there is some string $w\in L(G)$ such that $B$ is popped off of the stack immediately after $A$ during some valid parse of $w$ with $\delta_G$ (see Appendix \ref{app_stack_adjacency} for a description of the stack-adjacency computation algorithm).

When computing $\Displacement(t)$ in Algorithm \ref{alg_disp}, we track the stack symbol $A_\textit{prev}$: this is either the symbol most recently popped from the output stack $\sigma^{(o)}$, or\textemdash if the last operation was a stack backtrack\textemdash the symbol most recently enqueued to $\sigma^{(i)}$. When a stack backtrack occurs, we may safely restrict the search space to those symbols $A$ such that $A$ is stack-adjacent to $A_\textit{prev}$\textemdash i.e. $(A_\textit{prev},A)\in\mathcal{A_S}$ (Algorithm \ref{alg_disp}, line 7).



\section{Experiments}
\label{sec_experiments}

\begin{figure}[t]
\begin{subfigure}{.45\linewidth}
  \centering
  \includegraphics[width=.35\linewidth]{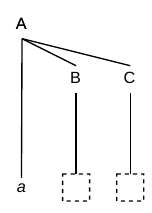}
    \caption{\textit{\textbf{\underline{a}} b c x y z} \\ $A\to a\hspace{1mm}B\hspace{1mm}C$ \\ $\sigma^{(i)}=(A)$ \\ $\sigma^{(o)}=(BC)$}
  \label{fig_stack_backtrack_1}
\end{subfigure}%
\begin{subfigure}{.45\linewidth}
  \centering
  \includegraphics[width=.35\linewidth]{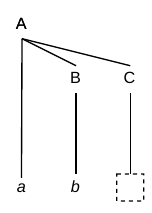}
  \caption{\textit{a \textbf{\underline{b}} c x y z} \\ $B\to b$ \\ $\sigma^{(i)}=(A)$ \\ $\sigma^{(o)}=(C)$}
  \label{fig_stack_backtrack_2}
\end{subfigure}
\\
\begin{subfigure}{.45\linewidth}
  \centering
  \includegraphics[width=.35\linewidth]{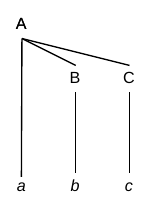}
  \caption{\textit{a b \textbf{\underline{c}} x y z} \\$C\to c$ \\ $\sigma^{(i)}=(A)$ \\ $\sigma^{(o)}=()$}
  \label{fig_stack_backtrack_3}
\end{subfigure}
\begin{subfigure}{.45\linewidth}
  \centering
  \includegraphics[width=.55\linewidth]{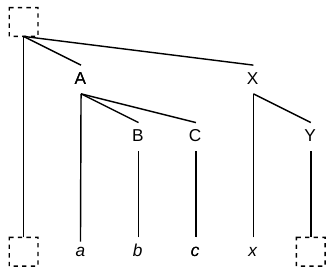}
  \caption{\textit{a b c \textbf{\underline{x}} y z} \\ $X\to x\hspace{1mm}Y$ \\ $\sigma^{(i)}=(AX)$ \\ $\sigma^{(o)}=(Y)$}
  \label{fig_stack_backtrack_4}
\end{subfigure}
\\
\begin{subfigure}{.45\linewidth}
  \centering
  \includegraphics[width=.9\linewidth]{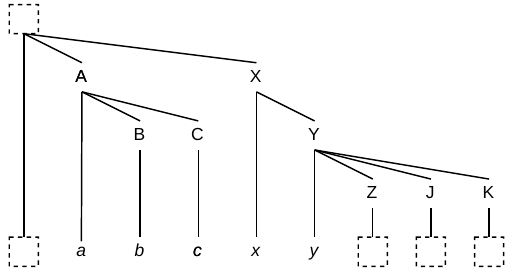}
    \caption{\textit{a b c x \textbf{\underline{y}} z} \\ $Y\to y\hspace{1mm}Z\hspace{1mm}J\hspace{1mm}K$ \\ $\sigma^{(i)}=(AX)$ \\ $\sigma^{(o)}=(ZJK)$}
  \label{fig_stack_backtrack_5}
\end{subfigure}%
\begin{subfigure}{.45\linewidth}
  \centering
  \includegraphics[width=.9\linewidth]{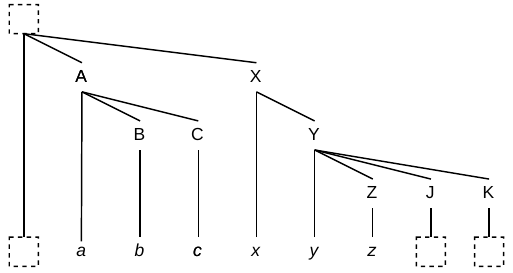}
      \caption{\textit{a b c x y \textbf{\underline{z}}} \\$Z\to z$ \\ $\sigma^{(i)}=(AX)$ \\ $\sigma^{(o)}=(JK)$}
  \label{fig_stack_backtrack_6}
\end{subfigure}%
\caption{Illustration of Algorithm \ref{alg_disp} applied to the token $t=\textit{abcxyz}$. Each sub-figure (\ref{fig_stack_backtrack_1}-\ref{fig_stack_backtrack_6}) illustrates the corresponding parse tree at that step: the caption indicates the current position/character $t_0$ (underlined/bolded), the rule $A\to t_0\hspace{1mm}\beta$ in $G$ that is applied (i.e. transition $\beta\in\delta_G(t_0,A)$), and the resulting input/output stacks $\sigma^{(i)}$/$\sigma^{(o)}$ after applying the transition.}
\label{fig_stack_backtrack}
\end{figure}

To assess the effectiveness of \textsc{CFGzip}, we evaluated Llama-3.2-3B-Instruct\footnote{\url{https://huggingface.co/meta-llama/Llama-3.2-3B-Instruct}} \cite{grattafiori2024llama}, Qwen3-4B-Instruct\footnote{\url{https://huggingface.co/Qwen/Qwen3-4B-Instruct-2507}} \cite{yang2025qwen3}, and gpt-oss-20b\footnote{\url{https://huggingface.co/openai/gpt-oss-20b}} \cite{agarwal2025gpt} on a series of structured-generation tasks (see Section \ref{sec_experiments_sub_tasks}). 

For each model and task, we evaluated three variants across five seeds. As a constrained baseline, we adopted a SoTA grammar engine, XGrammar2, using task-specific CFGs (see Section \ref{sec_experiments_sub_setup}); to evaluate our approach, we coupled \textsc{CFGzip} with XGrammar2. We compared both methods to an unconstrained baseline (i.e. the vanilla model).


\begin{table*}[ht]
\centering
\begin{tabular}{lc cc cc cc cc}
\toprule
& & \multicolumn{2}{c}{\textbf{JSON}} & \multicolumn{2}{c}{\textbf{XML}} & \multicolumn{2}{c}{\textbf{C++}} & \multicolumn{2}{c}{\textbf{Bython}} \\
\cmidrule(lr){3-4} \cmidrule(lr){5-6} \cmidrule(lr){7-8} \cmidrule(lr){9-10}
\textbf{Model} & \textbf{Vocab Size} & Time & Classes & Time & Classes & Time & Classes & Time & Classes \\
\midrule
Llama-3B & 128256 & 319 & 255.4 & 48.8 & 1591 & 63.9 & 3095 & 137.8 & 2011 \\
\midrule
Qwen-4B & 151936 & 319 & 257.8 & 60.1 & 1572 & 69.2 & 3057 & 151.7 & 1999 \\
\midrule
GPT-20B & 201088 & 478 & 253.4 & 86.0 & 1517 & 99.0 & 2815 & 172.7 & 1783 \\
\bottomrule
\end{tabular}
\caption{Wall-clock equivalence-class pre-computation time (seconds) and number of constructed equivalence classes per task for each model (vocabulary sizes given in number of tokens). JSON required a unique CFG for each of the 100 schemas: for this task, we report total pre-computation time and the mean number of equivalence classes.}
\label{table_precompute}
\end{table*}

All models were evaluated on all tasks with a temperature of 0.2 and a maximum of 500 generated tokens. Inference was performed on a single NVIDIA H100 GPU, and the pre-computation and grammar engine operations were run 16-threaded on an Intel Xeon Gold 6430 CPU. Additional details of our evaluation setup are located in Appendix \ref{app_setup_details}.

\subsection{Tasks}
\label{sec_experiments_sub_tasks}

For evaluation, we selected four structured-generation tasks. In addition to compute time, we evaluated the models on each benchmark across two binary dimensions: (i) syntactic correctness, i.e. syntactic well-formedness of the model's output; and (ii) functional correctness, i.e. whether the output behaves as specified by the query.


\paragraph{JSON Schema:} The json-mode-eval\footnote{\url{https://huggingface.co/datasets/NousResearch/json-mode-eval}} dataset consists of 100 queries along with schemas specifying the required key strings and value data types of the target JSON objects. We measured syntactic correctness by executing the generated output and checking for syntax errors, and functional correctness by recursively checking key-value equivalence between the output and reference objects. 

\paragraph{XML:} We restricted the StructEval structured-generation benchmark \cite{yang2026structeval} to the 200 instances requiring XML generation. We measured syntactic correctness by compiling the output using the Python XML module\footnote{\url{https://docs.python.org/3/library/xml.html}}, and functional correctness by verifying that all dot-path rules specified by the instance were satisfied by the generated output code.

\paragraph{C++:} We restricted the HumanEval-X \cite{zheng2023codegeex} coding benchmark to the 164 C++ instances. We measured syntactic correctness by compiling the output, and calculated functional correctness by checking if the output passed all corresponding tests for each respective instance. 

\paragraph{Bython:} As the name suggests, Bython\footnote{\url{https://github.com/mathialo/bython}} is simply Python, but uses braces and semi-colons to indicate scope and delimit statements (respectively), in place of meaningful whitespace. For our experiments, we translated all code in the HumanEval-X Python split to Bython, and added a description of Bython to the system prompt. To evaluate syntactic correctness, we compiled the generated Bython code to Python. Functional correctness was measured as in the C++ task: namely, by checking if the output passed all tests included in the instance.

\paragraph{} While CFG engines provide guarantees of syntactically-correct generation, we do not expect them to yield large improvements for tasks requiring common, ``high-resource'' context-free languages (CFLs) such as JSON, XML, and C++: they are ubiquitous in LLM training data, and so it is likely that SoTA models are already highly capable of generating strings that conform syntactically to these CFLs. On the other hand, a rare and/or domain-specific CFL such as Bython is far more likely to confound an LLM, and so it is precisely for these ``low-resource'' CFLs that we expect CFG-constrained generation to be most useful.

\subsection{Setup}
\label{sec_experiments_sub_setup}

\begin{table*}[ht]
\centering
\begin{tabular}{ll cc cc cc cc}
\toprule
& & \multicolumn{2}{c}{\textbf{JSON}} & \multicolumn{2}{c}{\textbf{XML}} & \multicolumn{2}{c}{\textbf{C++}} & \multicolumn{2}{c}{\textbf{Bython}} \\
\cmidrule(lr){3-4} \cmidrule(lr){5-6} \cmidrule(lr){7-8} \cmidrule(lr){9-10}
\textbf{Model} & \textbf{Constraint} & Syn. & Fun. & Syn. & Fun. & Syn. & Fun. & Syn. & Fun. \\
\midrule
\multirow{2}{*}{Llama-3B} & Unconstrained & 83.9 & 13.9 & 79.9 & 7.1 & 73.5 & 30.5 & 6.7 & 2.3 \\
 & Constrained & \textbf{99.8} & \textbf{73.7} & \textbf{85.1} & \textbf{16.3} & \textbf{75.0} & \textbf{32.7} & \textbf{52.1} & \textbf{9.2} \\
\midrule
\multirow{2}{*}{Qwen-4B} & Unconstrained & \textbf{100.0} & \textbf{76.5} & \textbf{83.9} & 61.1 & 88.9 & 70.6 & 2.2 & 1.0 \\
 & Constrained & 96.9 & 74.7 & 83.2 & \textbf{61.4} & \textbf{90.4} & \textbf{72.7} & \textbf{43.3} & \textbf{18.9} \\
\midrule
\multirow{2}{*}{GPT-20B} & Unconstrained & \textbf{100.0} & 80.2 & \textbf{82.1} & \textbf{60.1} & 48.4 & \textbf{41.2} & 13.0 & 10.8 \\
 & Constrained & \textbf{100.0} & \textbf{81.2} & 81.3 & 59.0 & \textbf{49.6} & 40.2 & \textbf{72.0} & \textbf{46.9} \\
\bottomrule
\end{tabular}
\caption{Constrained/unconstrained syntactic and functional correctness for each model and task. The best results for each model, task, and correctness type are indicated in bold.}
\label{table_correctness}
\end{table*}

\paragraph{Grammars.} For the JSON task, we used XGrammar2's built-in function for constructing CFGs from JSON schema specifications. We adapted an XML 1.1 specification\footnote{\url{https://www.liquid-technologies.com/Reference/Glossary/XML_EBNF1.1.html}} to the GBNF required by XGrammar2 for the XML task. For the C++ task, we used the C++ grammar from \citet{mundler2025constrained}. We constructed our Bython CFG from scratch: to the best of our knowledge, there is no existing Bython grammar specification.

\paragraph{Pre-Computation.} Wall-clock times for the \textsc{CFGzip} pre-computation procedure (see Section \ref{sec_method_sub_syn_con}) and the number of equivalence classes for each tokenizer/CFG are reported in Table \ref{table_precompute}: \textsc{CFGzip} achieves compression ratios ranging from $\sim$40:1 (Llama-3B/C++) to $\sim$800:1 (GPT-20B/JSON). 


\section{Results}
\label{sec_results}

Syntactic/functional correctness for each constrained/unconstrained model on each task are given in Table \ref{table_correctness}: we do not report separate scores for \textsc{CFGzip}, as it produces byte-identical outputs as the unmodified XGrammar2.

The across-the-board improvement in correctness for Llama-3B demonstrates the utility of constrained decoding for smaller\textemdash and therefore cheaper\textemdash LLMs: rather than using a larger model to improve performance, simply constraining the smaller LLM presents a viable, lower-cost alternative. On the other hand, the massive improvement in both syntactic and functional correctness for all three models on the Bython task shows that CFG-constrained generation is critical with complex and unfamiliar context-free languages\textemdash even for 20 billion parameter models such as GPT-20B. 




However, the wall-clock times reported in Table \ref{table_time_abs} demonstrate that such complex CFGs lead to unsustainable latency, even with a SoTA grammar engine: XGrammar2 increases inference time over the base model by factors of $\sim$2x and $\sim$10x for C++ and Bython, respectively. 

By directly compressing the token space, \textsc{CFGzip} cuts the overhead of XGrammar2 by one to two orders of magnitude. For all three models, our method nearly halves the total inference time of XGrammar2 on C++, and reduces the total inference time on Bython by a factor of $\sim$7.5x: \textsc{CFGzip} makes constrained decoding with these CFGs feasible at scale.

\begin{figure*}[t]
\begin{center}
\includegraphics[width=1.0\textwidth]{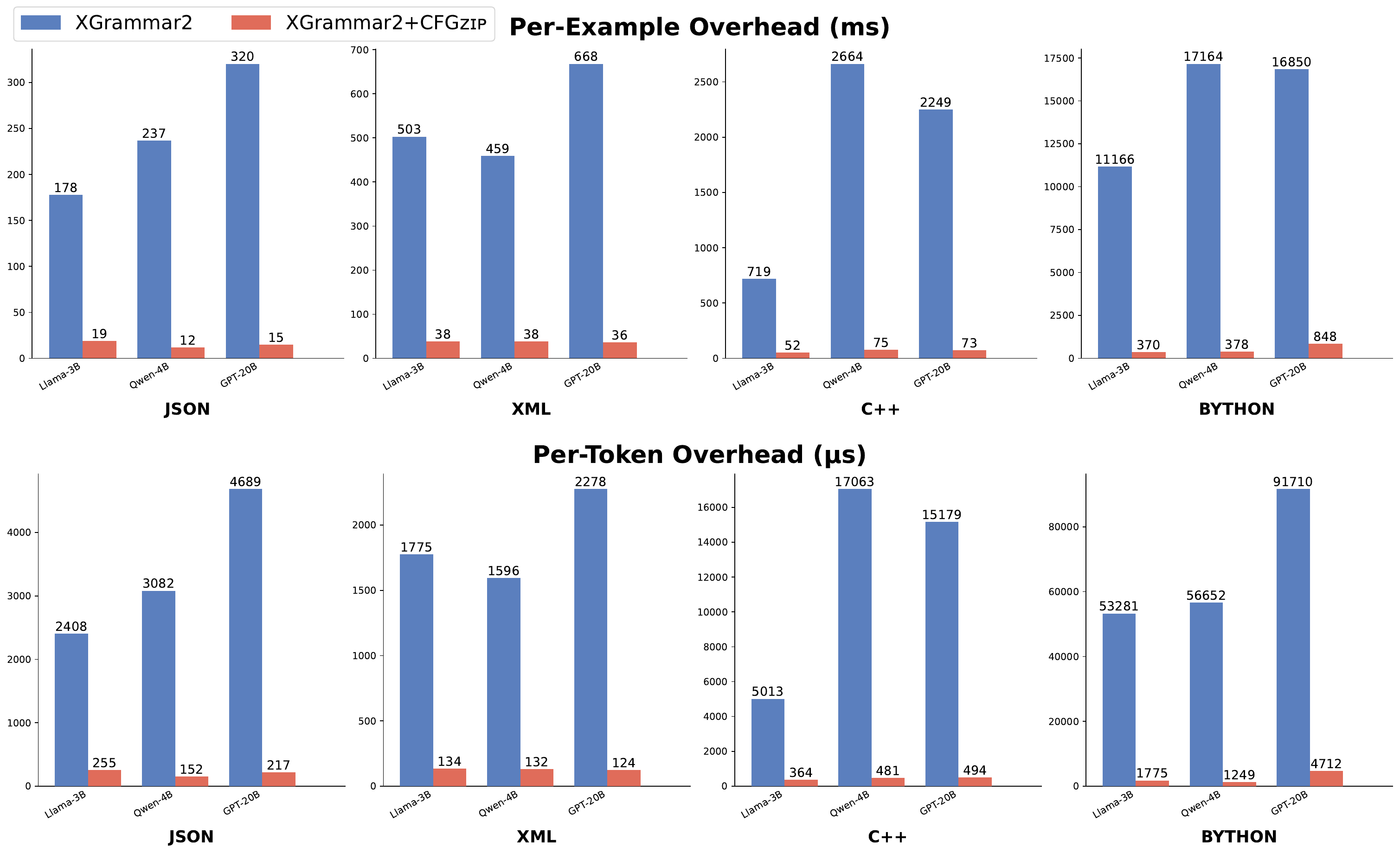}
\end{center}
\caption{Comparison of the mean per-example (top; milliseconds) and per-token (bottom; microseconds) overhead of vanilla XGrammar2 (blue) compared to XGrammar2 equipped with \textsc{CFGzip} (red).}
\label{fig_latency}
\end{figure*}

\begin{table}[t]
\centering
\scalebox{0.74}{\begin{tabular}{ll|rrrr}
\textbf{Model} & \textbf{Constraint} & \textbf{JSON} & \textbf{XML} & \textbf{C++} & \textbf{Bython} \\
\midrule
\multirow{3}{*}{Llama-3B} & Unconstrained & 3.42 & 3.67 & 1.42 & 1.39 \\
 & XGrammar2 & 3.60 & 4.18 & 2.14 & 12.55 \\
 & +\textsc{CFGzip} & 3.52 & 3.71 & 1.47 & 1.76 \\
\midrule
\multirow{3}{*}{Qwen-4B} & Unconstrained & 1.52 & 5.76 & 2.48 & 2.34 \\
 & XGrammar2 & 1.75 & 6.22 & 5.14 & 19.50 \\
 & +\textsc{CFGzip} & 1.53 & 5.80 & 2.55 & 2.72 \\
\midrule
\multirow{3}{*}{GPT-20B} & Unconstrained & 1.67 & 6.20 & 2.34 & 2.67 \\
 & XGrammar2 & 1.99 & 6.87 & 4.59 & 19.52 \\
 & +\textsc{CFGzip} & 1.68 & 6.23 & 2.41 & 3.52 \\
\bottomrule
\end{tabular}}
\caption{Mean generation time per example (in seconds)\textemdash including the LLM overhead\textemdash for each model, constraint method, and task.}
\label{table_time_abs}
\end{table}



\section{Related Work}
\label{sec_related_work}

There are several existing constrained decoding engines for structured generation. XGrammar2 achieves faster CFG compilation and mask generation than its predecessor, XGrammar \cite{dong2025xgrammar}, by swapping the PDA/stack-based parser of the latter for an Earley parser, and incorporating additional optimizations such as just-in-time compilation. The llguidance engine\footnote{\url{https://github.com/guidance-ai/llguidance}} similarly adopts an Earley parser that is highly optimized for use with JSON schema grammrs. 

The transformers-cfg engine employs an incremental parser \cite{angelov-2009-incremental}, which is similar to the PDA parser used by the built-in CFG engine of the llama.cpp\footnote{\url{https://github.com/ggml-org/llama.cpp}} inference accelerator. 

GreatGramma \cite{park2025flexible} also employs a PDA parser, and reduces overhead by pre-computing possible next tokens for certain stack prefixes. However, this engine is restricted to deterministic CFGs and does not ensure syntactic correctness: in some contexts, it fails to mask invalid tokens. SynCode \cite{ugare2025syncode} pre-computes possible next tokens for terminal finite-state machine (FSM) states, but is restricted to LR-parsable CFGs and can also fail to mask invalid tokens.

Outlines \cite{willard2023efficient} and lm-format-enforcer\footnote{\url{https://github.com/noamgat/lm-format-enforcer}} use FSM- and tree-based parsers (respectively) for constrained decoding parameterized by regular expressions and JSON schemas, but are not compatible with CFGs in general.

These methods are orthogonal and complementary\footnote{With the exception of llama.cpp (see Section \ref{sec_limitations})} to \textsc{CFGzip}:  token vocabulary compression synergizes with the online and offline optimization techniques of these grammar engines to reduce the search space and minimize edge cases during mask computation. To the best of our knowledge, \textsc{CFGzip} represents the first tool for compressing the token search space of a grammar engine through CFG-based equivalence classing.

\section{Conclusion}
\label{sec_conclusion}

We introduced \textsc{CFGzip}, an LLM token vocabulary compression technique for inference-time speedup for CFG-constrained decoding. Our method achieves a $\sim$10-20x reduction in latency across model/token vocabulary sizes and CFGs, resulting in a $\sim$2-7.5x reduction in total constrained inference time with complex grammars.

This approach unlocks previously-infeasible performance from CFG engines: although constrained decoding can more than quadruple functional correctness on a complex and unfamiliar context-free language such as Bython, even a SoTA grammar engine cannot handle the complexity of that language without introducing intractable overhead. \textsc{CFGzip} shrinks this overhead to near-negligible levels, allowing the practical use of CFG-constrained decoding in its most effective applications. 

\section{Limitations}
\label{sec_limitations}

\textsc{CFGzip} suffers from relatively long offline pre-computation times (see Table \ref{table_precompute}). This negates the advantages of our method in dynamic contexts requiring small, single-use CFGs, such as JSON schema and function call generation. However, SoTA grammar engines\textemdash in particular llguidance and XGrammar2\textemdash are already highly optimized for this use case, and incur minimal overhead with such CFGs (see Table \ref{table_time_abs}): \textsc{CFGzip} is intended instead for contexts requiring a static set of large, complex CFGs, such as code generation.

While \textsc{CFGzip} is theoretically engine-agnostic, we only evaluate our method with XGrammar2 in this paper, due to the below-listed incompatibilities of the other grammar engines discussed in Section \ref{sec_related_work}. 

Outlines\footnote{Outlines implements a method to compile arbitrary CFGs, but calling this method currently raises a NotImplementedError.} and lm-format-enforcer are only compatible with regular expressions and JSON schemas, and so cannot be used with the XML, C++, and Bython grammars of Section \ref{sec_experiments}. Similarly, GreatGramma and SynCode are not compatible with all CFGs\textemdash and do not guarantee syntactic correctness\textemdash so we did not evaluate our method with those engines. 

The transformers-cfg package does not currently support the gpt-oss tokenizer, and so could not be used to evaluate that model. More importantly, transformers-cfg cannot parse CFGs with left-recursive productions, making it incompatible with our C++ and Bython grammars.

We found that llguidance incorrectly rejected grammatical C++ and Bython strings that were accepted by XGrammar2: we suspect this is due to the engine's dynamic lexer prematurely rejecting valid paths under certain conditions. As it did not function correctly for half of our tasks, we did not evaluate \textsc{CFGzip} with this engine. 

On the other hand, the llama.cpp engine was incompatible with \textsc{CFGzip} for practical reasons: \textsc{CFGzip} is implemented in Python, while llama.cpp is implemented in C++ and does not expose Python bindings. 


\bibliography{anthology,custom}

\appendix

\section{Proof of Theorem \ref{thm_token_seq_equiv}}
\label{app_proof_thm_token_seq_equiv}

First, given a string $w\in\Sigma^*$, define the left quotient $w\backslash L$ of $L$ by $w$ as the set of all $z\in\Sigma^*$ such that $wz\in L$: $w\backslash L=\{z\in\Sigma^*\hspace{1mm}|\hspace{1mm}wz\in L\}$. It is clear that the definition of syntactic congruence $t\equiv_L u$ in Definition \ref{def_syn_congruence} is equivalent to the condition that for all $w\in\Sigma^*$: $wt\backslash L=wu\backslash L$.

\begin{equation}
\label{eq_syn_con_quotient}
    t\equiv_L u\leftrightarrow\forall w\in\Sigma^*\colon wt\backslash L=wu\backslash L
\end{equation}

We now prove that the quotient map $\Sigma^*\to\Sigma^*\backslash\equiv_L$ is a monoid homomorphism.

\begin{lemma}
\label{lem_syn_con_cat}
For any $w,w',z,z'\in\Sigma^*$, $w\equiv_Lw'$ and $z\equiv_Lz'$ implies that $wz\equiv_Lw'z'$.
\end{lemma}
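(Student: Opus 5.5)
The plan is to prove Lemma \ref{lem_syn_con_cat} by a two-step substitution argument, passing through the intermediate string $w'z$. Congruence $\equiv_L$ is an equivalence relation: reflexivity and symmetry are immediate from the biconditional in Definition \ref{def_syn_congruence}, and transitivity follows by chaining biconditionals. It therefore suffices to show $wz\equiv_L w'z$ and $w'z\equiv_L w'z'$ separately.

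For the first step, I fix arbitrary contexts $x,y\in\Sigma^*$ and need $xwzy\in L\leftrightarrow xw'zy\in L$. I read $zy$ as a single right context and apply Definition \ref{def_syn_congruence} to $w\equiv_L w'$ with left context $x$ and right context $zy$. This gives exactly the required biconditional, using associativity of concatenation to regroup $x\,w\,(zy)$ as $xwzy$.

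The second step is symmetric. For arbitrary $x,y$, I apply the definition to $z\equiv_L z'$ with left context $xw'$ and right context $y$, which gives $xw'zy\in L\leftrightarrow xw'z'y\in L$. Chaining the two biconditionals yields $xwzy\in L\leftrightarrow xw'z'y\in L$ for all $x,y$, which is precisely $wz\equiv_L w'z'$.

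There is no real obstacle here. The only point needing care is that the definition quantifies over \emph{all} contexts, so the contexts may absorb the other factor; this is what makes each single-factor substitution legitimate. Alternatively, one could phrase the argument with the quotient characterization (\ref{eq_syn_con_quotient}), but the direct context argument is cleaner. Once the lemma holds, Theorem \ref{thm_token_seq_equiv} follows by induction on $n$, giving $\tau\equiv_L\tau'$, and then taking $w=\epsilon$ in the definition shows that membership of $\tau z$ and $\tau' z$ in $L$ agree for every $z$, so $\tau$ and $\tau'$ have the same prefix status.
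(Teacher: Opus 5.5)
Your proof is correct and follows the paper's argument essentially verbatim. Like the paper, you pass through the intermediate string $w'z$, apply $w\equiv_L w'$ with right context $zy$ and $z\equiv_L z'$ with left context $xw'$, and chain the two biconditionals; you are only more explicit than the paper about which contexts absorb the other factor and about $\equiv_L$ being an equivalence relation.
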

\begin{proof}
    By assumption and Definition \ref{def_syn_congruence}, we have the following equivalences for all $x,y\in\Sigma^*$:
    
    \begin{enumerate}[(i)]
        \item $xwy\in L\leftrightarrow xw'y\in L$
        \item $xzy\in L\leftrightarrow xz'y\in L$
    \end{enumerate}
    
    By (i), we have $xwzy\in L\leftrightarrow xw'zy\in L$, i.e. $wz\equiv_Lw'z$. By (ii), we have $xw'zy\in L\leftrightarrow xw'z'y\in L$, i.e. $w'z\equiv_Lw'z'$. Chaining equivalences, we have: $wz\equiv_Lw'z\equiv_Lw'z'$.
\end{proof}

By Lemma \ref{lem_syn_con_cat}, $\tau\equiv_L\tau'$, and so by Equation \ref{eq_syn_con_quotient}, we have $\tau\backslash L=\tau'\backslash L$.

For any $a\in\Sigma^*$, the condition $a\in\textit{Pr}(L)$ is equivalent to the condition that $a\backslash L\neq\emptyset$. Therefore, $\tau\backslash L=\tau'\backslash L$ implies that $\tau\in\textit{Pr}(L)\leftrightarrow\tau'\in\textit{Pr}(L)$. 

This completes the proof of Theorem \ref{thm_token_seq_equiv}.

\section{Proof of Theorem \ref{thm_displacements}}
\label{app_proof_thm_displacements}

Define $\pi\colon \Sigma^*\times V^*\to\mathcal{P}(V^*)$ to be the function that takes a sequence of terminals $t$ (token) and a sequence of non-terminals $\alpha$ (stack) as input, and returns the set of stacks obtained by applying the transition function $\delta_G$ to $t$ given $\alpha$ (Equation \ref{eq_pi}).

\begin{equation}
\label{eq_pi}
    \pi(t,\alpha)=\begin{cases}
    \{\alpha\} &\text{if }t=\epsilon \\
    \emptyset &\text{if }\alpha=\epsilon \\
    \underset{\beta\in\delta_G(t_0,\alpha_0)}{\bigcup} \pi(t_{1:},\beta\alpha_{1:}) &\text{otw.}
    \end{cases}
\end{equation}

Now define $\pi_S\colon\Sigma^*\to\mathcal{P}$ as the restriction of $\pi$ to the start symbol $S$: $\pi_S(t)=\pi(t,S)$. 

We now prove in Lemma \ref{lem_pi_quotient} that equivalence of $\pi_S$-sets implies equivalence of left quotients (see Equation \ref{eq_syn_con_quotient}).

\begin{lemma}
\label{lem_pi_quotient}
    For all $w,z\in\Sigma^*$: $\pi_S(w)=\pi_S(z)$ implies that $w\backslash L(G)=z\backslash L(G)$. 
\end{lemma}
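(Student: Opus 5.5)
The plan is to show that membership of a continuation $x$ in $w\backslash L(G)$ is determined entirely by the set $\pi_S(w)$. The key auxiliary fact is a \emph{composition property} of $\pi$. For all $w,x\in\Sigma^*$ and $\alpha\in V^*$,
\[
\pi(wx,\alpha)=\bigcup_{\gamma\in\pi(w,\alpha)}\pi(x,\gamma).
\]
I would prove this by induction on $|w|$, following the three cases of Equation~\ref{eq_pi}.
\begin{enumerate}[(i)]
\item If $w=\epsilon$, then $\pi(w,\alpha)=\{\alpha\}$ and both sides equal $\pi(x,\alpha)$.
\item If $w\neq\epsilon$ and $\alpha=\epsilon$, both sides are empty. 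On the left, $wx\neq\epsilon$ and $\alpha=\epsilon$; on the right, the union ranges over the empty set.
\item Otherwise, unfold one step: $\pi(wx,\alpha)=\bigcup_{\beta\in\delta_G(w_0,\alpha_0)}\pi(w_{1:}x,\beta\alpha_{1:})$. Apply the induction hypothesis to each term, then regroup the union using the recursive case for $\pi(w,\alpha)$.
\end{enumerate}

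Next, I would characterise the language through $\pi_S$: for every $y\in\Sigma^*$,
\[
y\in L(G)\iff \epsilon\in\pi_S(y).
\]
This is exactly the PDA acceptance condition from Section~\ref{sec_background_sub_cfgs}. Starting from stack $(S)$, the PDA reads $y$ character by character using $\delta_G$, and $\pi(y,S)$ collects precisely the reachable final stacks. It blocks, contributing nothing, when the stack empties before the input ends, matching the $\alpha=\epsilon$ case. It accepts when the empty stack is among the final stacks. The rule $S\to\epsilon$ needs a brief check. For $y=\epsilon$ the recursion gives $\pi_S(\epsilon)=\{S\}$, not $\{\epsilon\}$. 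I would therefore treat it as a side condition: adopt the convention that $\epsilon\in L(G)$ iff $S\to\epsilon\in R$, and note that since $S$ never appears on the right-hand side of any rule, this affects only the empty string.

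The main argument then goes as follows. Assume $\pi_S(w)=\pi_S(z)$ and take any $x\in\Sigma^*$.
\begin{enumerate}[(i)]
\item For $x\neq\epsilon$: by the two facts above, $wx\in L(G)$ iff $\epsilon\in\bigcup_{\gamma\in\pi_S(w)}\pi(x,\gamma)$. This depends only on the set $\pi_S(w)$, so it coincides with the corresponding condition for $z$.
\item For $x=\epsilon$: the condition becomes $\epsilon\in\pi_S(w)$, which again depends only on $\pi_S(w)$.
\end{enumerate}
Hence $x\in w\backslash L(G)\iff x\in z\backslash L(G)$. The boundary case $w=\epsilon$ or $z=\epsilon$ needs the $S\to\epsilon$ convention. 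If $\pi_S(w)=\pi_S(\epsilon)=\{S\}$, then $\epsilon\notin\pi_S(w)$ and $wx\in L(G)$ iff $x\in L(G)$, so consistency holds.

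I expect the main obstacle to be making the acceptance characterisation rigorous, namely that $L(G)$ coincides with $\{y:\epsilon\in\pi_S(y)\}$. This requires an induction relating leftmost derivations in GNF to runs of $\pi$: each leftmost derivation step $A\to a\beta$ replaces the leftmost non-terminal, which is the stack top, and emits exactly one terminal. I would state this correspondence as a standard GNF/PDA fact, prove it by induction on the length of the leftmost derivation, and handle $S\to\epsilon$ separately as described. The composition property, by contrast, is routine bookkeeping.
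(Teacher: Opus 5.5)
Your argument follows the paper's route. You characterise acceptance as $\epsilon\in\pi_S(\cdot)$, use the composition identity $\pi_S(wx)=\bigcup_{\gamma\in\pi_S(w)}\pi(x,\gamma)$, and conclude that membership of $x$ in $w\backslash L(G)$ depends only on $\pi_S(w)$. The paper merely calls that identity ``by definition'', whereas you prove it by induction. You are also right that the paper's unqualified claim $x\in L(G)\leftrightarrow\epsilon\in\pi_S(x)$ fails at $x=\epsilon$ when $S\to\epsilon\in R$, since $\pi_S(\epsilon)=\{S\}$.

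Your resolution of that boundary case, however, does not yet go through. Take $z=\epsilon$ and $\pi_S(w)=\{S\}$. At $x=\epsilon$, your claim ``$wx\in L(G)$ iff $x\in L(G)$'' reads ``$w\in L(G)$ iff $\epsilon\in L(G)$''. If $w$ were nonempty, your characterisation would give $w\notin L(G)$, because $\epsilon\notin\pi_S(w)$. But $\epsilon\in L(G)$ whenever $S\to\epsilon\in R$, so the equivalence would fail.

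The missing fact is that this situation cannot arise. Every $\beta\in\delta_G(a,A)$ lies in $(V-\{S\})^*$, so once a character has been consumed the stack never contains $S$. Hence $S\in\pi_S(w)$ iff $w=\epsilon$, so $\pi_S(w)=\pi_S(\epsilon)$ forces $w=\epsilon$ and the boundary case is trivial.

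Equivalently, you can state the $x=\epsilon$ condition uniformly: $w\in L(G)$ iff $\epsilon\in\pi_S(w)$, or $S\in\pi_S(w)$ and $S\to\epsilon\in R$. This visibly depends only on $\pi_S(w)$. With this one line added, the proof is complete.
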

\begin{proof}
    For all $x\in\Sigma^*$, $\pi_S(x)$ is by definition the result of non-deterministically applying $\delta_G$ to $x$, starting from $S$. By definition, $x\in L(G)$ if and only if $\delta_G$ halts on the empty stack. Therefore, we have $x\in L(G)\leftrightarrow\epsilon\in\pi_S(x)$.
    
    Applying the definition of the left quotient and the above equivalence, we have $x\backslash L(G)=\{y\hspace{1mm}|\hspace{1mm}xy\in L(G)\}=\{y\hspace{1mm}|\hspace{1mm}\epsilon\in\pi_S(xy)\}$. As $\pi_S(xy)=\bigcup_{\beta\in\pi_S(x)}\pi(y,\beta)$ by definition (Equation \ref{eq_pi}), we have $x\backslash L(G)=\{y\hspace{1mm}|\hspace{1mm}\epsilon\in\pi^*(x,y)\}$, where $\pi^*(x,y)=\bigcup_{\beta\in\pi_S(x)}\pi(y,\beta)$.

    By assumption, $\pi_S(w)=\pi_S(z)$, which implies that $\pi^*(w,\text{\textendash})=\pi^*(z,\text{\textendash})$. By the above equivalence, this in turn implies that $w\backslash L(G)=z\backslash L(G)$.
\end{proof}

We now define the \textit{pre-displacement} $D_G(t)$ of a token $t$: $D_G(t)=\{(\alpha,\beta)\hspace{1mm}|\hspace{1mm}\alpha\in V^*\land\beta\in\pi(t,\alpha)\}$. The relation defined by equivalence of pre-displacements refines syntactic congruence (Lemma \ref{lem_pre_displacement}).

\begin{lemma}
\label{lem_pre_displacement}
    For all $t,u\in\Sigma^*$: $D_G(t)=D_G(u)$ implies that $t\SynCon u$.
\end{lemma}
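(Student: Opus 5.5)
The plan is to reduce syntactic congruence to equality of left quotients via Equation \ref{eq_syn_con_quotient}, and then reduce equality of left quotients to equality of $\pi_S$-sets via Lemma \ref{lem_pi_quotient}. Concretely, assume $D_G(t)=D_G(u)$ and fix an arbitrary $w\in\Sigma^*$. It suffices to show $\pi_S(wt)=\pi_S(wu)$. Lemma \ref{lem_pi_quotient} then gives $wt\backslash L(G)=wu\backslash L(G)$, and since $w$ was arbitrary, Equation \ref{eq_syn_con_quotient} yields $t\SynCon u$.

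The key intermediate fact is a composition property of $\pi$: for all $x,y\in\Sigma^*$ and $\alpha\in V^*$,
\begin{equation*}
\pi(xy,\alpha)=\bigcup_{\gamma\in\pi(x,\alpha)}\pi(y,\gamma).
\end{equation*}
The paper already uses this (for $\alpha=S$) in the proof of Lemma \ref{lem_pi_quotient}. I would nonetheless verify it by induction on $|x|$ using Equation \ref{eq_pi}, with three cases.
\begin{itemize}
\item If $x=\epsilon$, both sides equal $\pi(y,\alpha)$.
\item If $x\neq\epsilon$ and $\alpha=\epsilon$, both sides are empty, since $xy\neq\epsilon$.
\item Otherwise, unfold one step: $\pi(xy,\alpha)=\bigcup_{\beta\in\delta_G(x_0,\alpha_0)}\pi(x_{1:}y,\beta\alpha_{1:})$. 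Apply the induction hypothesis to $x_{1:}$ and regroup the unions.
\end{itemize}

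With composition available, I apply it with $\alpha=S$, $x=w$ and $y\in\{t,u\}$:
\begin{equation*}
\pi_S(wt)=\bigcup_{\gamma\in\pi_S(w)}\pi(t,\gamma),\qquad \pi_S(wu)=\bigcup_{\gamma\in\pi_S(w)}\pi(u,\gamma).
\end{equation*}
By definition, $D_G(t)=D_G(u)$ means exactly that $\pi(t,\gamma)=\pi(u,\gamma)$ for every $\gamma\in V^*$, because $D_G(t)$ is the graph of the set-valued map $\gamma\mapsto\pi(t,\gamma)$. The two unions therefore agree term by term, which gives $\pi_S(wt)=\pi_S(wu)$.

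The only step needing care is the composition lemma, in particular the edge case in Equation \ref{eq_pi} where $t=\epsilon$ and $\alpha=\epsilon$ simultaneously. The first case is meant to take priority there, so that $\pi(\epsilon,\epsilon)=\{\epsilon\}$. I would state this convention explicitly, since the acceptance criterion $\epsilon\in\pi_S(x)$ used in Lemma \ref{lem_pi_quotient} depends on it. Once that ordering is fixed, the induction is routine, and the rest of the argument is direct unfolding of definitions.
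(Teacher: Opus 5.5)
Your proposal is correct and follows essentially the same route as the paper. From $D_G(t)=D_G(u)$ you get $\pi(t,\gamma)=\pi(u,\gamma)$, hence $\pi_S(wt)=\pi_S(wu)$ for every $w$, and then Lemma~\ref{lem_pi_quotient} with Equation~\ref{eq_syn_con_quotient} yields $t\SynCon u$; the paper uses the composition identity $\pi(xy,\alpha)=\bigcup_{\gamma\in\pi(x,\alpha)}\pi(y,\gamma)$ and the convention $\pi(\epsilon,\epsilon)=\{\epsilon\}$ implicitly, and you make both explicit.
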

\begin{proof}
    The assumption that $D_G(t)=D_G(u)$ implies by definition of $D_G(\text{\textendash})$ that $\pi(t,\gamma)=\pi(u,\gamma)$ for each $w\in\Sigma^*$ and each $\gamma\in\pi_S(w)$. This in turn implies that $\pi_S(wt)=\pi_S(wu)$, which by Lemma \ref{lem_pi_quotient} implies that $wt\backslash L(G)=wu\backslash L(G)$.

    We therefore have $wt\backslash L(G)=wu\backslash L(G)$ for all $w\in\Sigma^*$: this is equivalent to the definition of $t\SynCon u$ (see Equation \ref{eq_syn_con_quotient}).
\end{proof}

Given a token $t$ and a pair $(\alpha,\beta)\in D_G(t)$, at most $\textit{len}(t)$ symbols are consumed from $\alpha$ when parsing $t$. This is to say that there is some $1\leq n\leq\textit{len}(t)$ such that only the first $n$ symbols of $\alpha$ are consumed, and the remaining subsequence $\gamma$ of $\alpha$ is shared with $\beta$: $\alpha=\hat{\alpha}\gamma$ where $\hat{\alpha}=A_1\dots A_n\in V^*$, and $\beta=\hat{\beta}\gamma$ where $\hat{\beta}\in V^*$.


$D_G(t)$ is thus \textit{precisely} the set of pairs $(\hat{\alpha}\gamma,\hat{\beta}\gamma)$ such that $\hat{\alpha}$ is entirely consumed when parsing $t$, $\hat{\beta}\in\pi(t,\hat{\alpha})$, and $\gamma\in V^*$.



Now, define $V^{(0)}=V\times\{0\}$, $V^{(1)}=V\times\{1\}$, $f_i\colon V^*\to(V^{(i)})^*$ as $f_i(A_0\dots A_n)=(A_0,i)\dots(A_n,i)$, and $g\colon(V^{(0)}\cup V^{(1)})^*\to V^*$ as $g((A_0,i)\dots(A_n,i))=A_0\dots A_n$. Define $\pi'\colon\Sigma^*\times(V^{(0)}\cup V^{(1)})^*\to\mathcal{P}((V^{(0)}\cup V^{(1)})^*)$ analogously to $\pi$ in Equation \ref{eq_pi}, with the exception that $\pi'$ takes sequences of symbols of the form $(A,i)$ (for $i\in\{0,1\}$), and pushes to the stack sequences of symbols of the form $(B,1)$\textemdash the 0/1 are used to track whether the symbol was present in the initial input stack. 

\begin{equation}
\label{eq_pi_prime}
    \pi'(t,\alpha)=\begin{cases}
    \{\alpha\} &\text{if }t=\epsilon \\
    \emptyset &\text{if }\alpha=\epsilon \\
    \underset{\beta\in\delta_G(t_0,g(\alpha_0))}{\bigcup} \pi'(t_{1:},f_1(\beta)\alpha_{1:}) &\text{otw.}
    \end{cases}
\end{equation}

Define $D'_G(t)=\{(g(\alpha),g(\beta))\hspace{1mm}|\hspace{1mm}\alpha\in(V^{(0)})^*\land \beta\in\pi'(t,\alpha)\cap(V^{(1)})^*\}$. This is exactly $D_G(t)$ restricted to those pairs $(\alpha,\beta)$ such that $\alpha$ is entirely consumed when parsing $t$ to derive $\beta$. 

By the reverse of the above argument, we have that $D_G(t)=\{(\alpha\gamma,\beta\gamma)\hspace{1mm}|\hspace{1mm}(\alpha,\beta)\in D'_G(t)\land\gamma\in V^*\}$. Therefore, we have $D'_G(t)=D'_G(u)\rightarrow D_G(t)=D_G(u)$ for all $t,u\in\Sigma^*$. Chained with Lemma \ref{lem_pre_displacement}, we now have $D'_G(t)=D'_G(u)\rightarrow t\SynCon u$.

As defined in Algorithm \ref{alg_disp}, the displacement of a token $t$ is precisely $\Displacement(t)=\{(\alpha,\beta)\in D'_G(t)\hspace{1mm}|\hspace{1mm}\forall0< i<\textit{len}(\alpha):(\alpha_{i-1},\alpha_i)\in\mathcal{A_S}\}$, where $\mathcal{A_S}$ is the stack adjacency relation (see Appendix \ref{app_stack_adjacency}). 

This only eliminates from $D'_G(t)$ those pairs $(\alpha,\beta)$ such that $\alpha$ contains a pair $(A_i,A_{i+1})$ of non-terminals such that $A_{i+1}$ cannot be popped off the stack immediately after $A_i$ during any parse starting from $S$. It therefore still holds that $\Displacement(t)=\Displacement(u)\rightarrow\pi_S(wt)=\pi_S(wu)$ for all $w\in\Sigma^*$, which by the argument in Lemma \ref{lem_pre_displacement} implies that $t\SynCon u$.

This completes the proof of Theorem \ref{thm_displacements}.




\section{Stack Adjacency}
\label{app_stack_adjacency}

Note that we require only a \textit{subset} of the stack adjacency relation $\mathcal{A_S}\subseteq V\times V$: in practice, we may limit $\mathcal{A_S}$ to those pairs $(X,Y)$ such that a stack backtrack can occur in Algorithm \ref{alg_disp} after $X$, as we are computing this relation solely for the purpose of restricting the stack backtrack operation. By definition, a stack backtrack only occurs when the stack $\sigma^{(o)}$ is empty: i.e. $X$ is the only symbol on the stack when it is popped, and no symbols are pushed back onto the stack\textemdash this corresponds to the application of a rule of the form $X\to c$ in $G$. 

\paragraph{Pre-computation.} We pre-compute $\mathcal{A_S}$ only once before constructing $\Displacement(t)$ for each token $t$. We first compute the directed graph $\mathcal{H}=(\mathcal{V},\mathcal{E})$, where $\mathcal{V}=V$ is the set of non-terminals of the CFG $G$, and there is an edge $A\rightarrow B\in\mathcal{E}$ for each production $A\rightarrow a\hspace{1mm}\beta\hspace{1mm}B$ in $G$ ($\beta\in V^*$)\textemdash i.e. where $B$ is the right-most non-terminal of a rule headed by $A$. 

\begin{figure}[t]
\begin{center}
\includegraphics[width=1.0\linewidth]{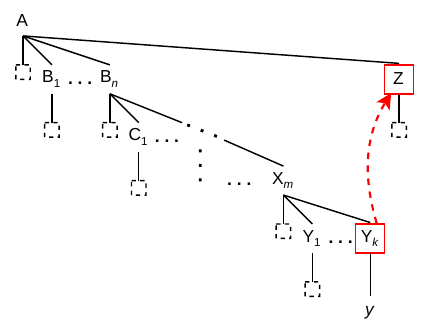}
\end{center}
\caption{Illustration of the stack adjacency sub-relation $\mathcal{A_S}$. Here, $Y_k$ is stack-adjacent to $Z$\textemdash i.e. $(Y_k,Z)\in\mathcal{A_S}$: there is some possible parse in which $Y_k$ is popped off the stack immediately before $Z$.}
\label{fig_poss_bigrams}
\end{figure}

As illustrated in Figure \ref{fig_poss_bigrams}, we calculate $\mathcal{A_S}$ from $\mathcal{H}$ as follows. For each non-terminal $Z$ and each production $A\rightarrow a\hspace{1mm}B_1\dots B_n\hspace{1mm}Z\hspace{1mm}\beta\in R$ ($\beta\in V^*$), we perform a breadth-first search on $\mathcal{G}$ starting from $B_n$ to find all $Y_k$ such that there is a path from $B_n$ to $Y_k$ in $\mathcal{H}$ and there is a unary production $Y_k\rightarrow y$ in $G$: each such $Y_k$ is popped from the stack immediately before $Z$ during some valid parse\textemdash and can result in a stack backtrack in Algorithm \ref{alg_disp}\textemdash and so we add $(Y_k,Z)$ to $\mathcal{A_S}$.

\section{Experimental Setup: Additional Details}
\label{app_setup_details}

\begin{table}[t]
\centering
\scalebox{0.87}{\begin{tabular}{l|ccc}
 & \textbf{Llama-3B} & \textbf{Qwen-4B} & \textbf{GPT-20B} \\
\midrule
Unconstrained & 0.0 & 0.0 & 0.0 \\
XGrammar2 & 5.1 & 5.2 & 4.9 \\
\textsc{CFGzip} & 0.0 & 0.0 & 0.2 \\
\end{tabular}}
\caption{Percentage of Bython task instances across all five seeds that timed out after 15 minutes, by model and constraint type.}
\label{table_timeout}
\end{table}

For the Bython task of Section \ref{sec_experiments}, several instances took multiple hours to generate with XGrammar2: for this reason, we placed a 15-minute generation timeout window on all models for this task. This did not impact the correctness scores in Table \ref{table_correctness}: instances for which the XGrammar2-constrained model timed out were invariably pathological examples in which the model was caught in a loop of generating incoherent text.

Roughly 5\% of all examples timed out with XGrammar2 across all five seeds (see Table \ref{table_timeout}), versus only 0.2\% of the examples for the \textsc{CFGzip}-equipped GPT model\textemdash and 0.0\% for all other \textsc{CFGzip}-equipped models\textemdash corresponding to two examples out of 164 across five seeds (i.e. 2/820).

We treated these timed-out examples as outliers and did \textit{not} include them when calculating the overhead times reported in Figure \ref{fig_latency} and Table \ref{table_time_abs}: the speedup afforded by \textsc{CFGzip} is not attributable to those pathological examples.


\end{document}